\def\x{{\mathbf x}}
\def\m{{\mathbf m}}
\def\s{{\mathbf s}}
\def\w{{\mathbf w}}
\def\e{{\mathbf e}}
\def\B{\mathbf{B}}
\def\W{\mathbf{W}}
\def\D{\mathbf{D}}
\def\0{\mathbf{0}}
\def\R{\mathds{R}}
\newtheorem{theorem}{Theorem}[section]
\newtheorem{proposition}{Proposition}[section]
\newtheorem{lemma}[theorem]{Lemma}
\DeclareMathOperator*{\argmax}{arg\,max}
\def\ps@IEEEtitlepagestyle{%
  \def\@oddfoot{\mycopyrightnotice}%
  \def\@evenfoot{}%
}
\def\mycopyrightnotice{%
  {\footnotesize xxx-x-xxxx-xxxx-x/xx/~\copyright~2021 IEEE\hfill}% <--- Change here
  \gdef\mycopyrightnotice{}
}
\newcommand\AtPageUpperMyright[1]{\AtPageUpperLeft{%
 \put(\LenToUnit{0.5\paperwidth},\LenToUnit{-1cm}){%
     \parbox{0.5\textwidth}{\raggedleft\fontsize{9}{11}\selectfont #1}}%
 }}%
\newcommand{\conf}[1]{%
\AddToShipoutPictureBG*{%
\AtPageUpperMyright{#1}
}
}
\title{Learning from Incomplete Features by Simultaneous Training of Neural Networks and Sparse Coding}
\begin{document}

\author{\IEEEauthorblockN{Cesar F.~Caiafa\IEEEauthorrefmark{1}\IEEEauthorrefmark{2}, Ziyao Wang\IEEEauthorrefmark{3}, Jordi Sol\'{e}-Casals\IEEEauthorrefmark{4}, and Qibin Zhao\IEEEauthorrefmark{1}}
\IEEEauthorblockA{
Tensor Learning Team, Center for Advanced Intelligence Project, RIKEN, JAPAN\IEEEauthorrefmark{1}\\
Instituto Argentino de Radioastronom\'{i}a, CONICET-CCT La Plata / CIC-PBA / UNLP, ARGENTINA\IEEEauthorrefmark{2}\\
School of Automation, Southeast University, CHINA\IEEEauthorrefmark{2}\IEEEauthorrefmark{3}\\
University of Vic - Central University of Catalonia, SPAIN\IEEEauthorrefmark{4}\\
Corresponding author: C. F. Caiafa (Email: ccaiafa@gmail.com)}  
}

\maketitle
\begin{abstract}
In this paper, the problem of training a classifier on a dataset with incomplete features is addressed. We assume that different subsets of features (random or structured) are available at each data instance. This situation typically occurs in the applications when not all the features are collected for every data sample.
A new supervised learning method is developed to train a general classifier, such as a logistic regression or a deep neural network, using only a subset of features per sample, while assuming sparse representations of data vectors on an unknown dictionary. 
Sufficient conditions are identified, such that, if it is possible to train a classifier on incomplete observations so that their reconstructions are well separated by a hyperplane, then the same classifier also correctly separates the original (unobserved) data samples. 
Extensive simulation results on synthetic and well-known datasets are presented that validate our theoretical findings and demonstrate the effectiveness of the proposed method compared to traditional data imputation approaches and one state-of-the-art algorithm.
\end{abstract}

\section{Introduction}

\label{sec:intro}
Learning methods from limited or imperfect data has attracted great attention in the literature recently. Datasets with limited, weak, noisy labels or incomplete features represent an important and still open problem. In this paper, we address the problem of training a classifier on a dataset with incomplete features, which arises in many machine learning applications where sometimes the measurements are incomplete, noisy or affected by artifacts. Examples of this situation include: recommendation systems built upon the information gathered by different users where not all the users have fully completed their forms; medical datasets where typically not all tests can be performed on every patient; or a self-driving vehicle or robot where objects in the view field can be partially occluded. 

Handling correctly the incomplete-features problem is a classical challenge in machine learning. Skipping missing features by setting them to zero values damages the classification accuracy \cite{Chechik:2008vk}.  Most previous studies addressed this problem by using an imputation approach, which consists of performing data completion followed by training the classifier with those reconstructions (referred here as the sequential method). However, this strategy cannot ensure the statistical consistence of the classifier, as data completion is usually fully unsupervised or label information is partially or inefficiently exploited.

In this work, a new supervised learning method is developed to train a general classifier, such as a logistic regression or a deep neural network, using only a subset of features per sample, while assuming sparse representations of data vectors on an unknown dictionary. The proposed method simultaneously learns the classifier, the dictionary and the corresponding sparse representation of each input data sample. In this way, we combine the approximation power and simplicity of sparse coding with the extraordinary ability of neural networks (NNs) to model complex decision functions (classifiers) with the goal to successfully train a classifier based on incomplete features. 

We analyze the limitations of the sequential approach (section \ref{sec:limitation}), {\it i.e.} imputation followed by training, and introduce the simultaneous classification and coding approach in section \ref{sec:gen_approach}. Our method consists of incorporating a sparse data representation model into a single cost function that is optimized for training the classifier and, at the same time, finding the best representation of the observed data. A learning algorithm is presented in section \ref{sec:algo} to train a classifier on incomplete features and  sufficient conditions under which such a classifier performs as good as the ideal classifier, {\it i.e.} the one that can be obtained from complete observations, is identified (section \ref{sec:theory}). Extensive experimental results are presented in section \ref{sec:experiments}, using synthetic and well known benchmark datasets that validate our theoretical findings and illustrate the effectiveness of the proposed method.

%-------------------------------------------------------------------------
%%%%%%%%%%%%%%%%%%
\subsection{Related work}  
Practical sequential methods based on statistical imputation, such as computing the ``mean'', ``regression'' and ``multiple'' imputation techniques are common practice \cite{RUBIN:2014vw}. Remarkably, it was shown that imputing with a constant, {\it e.g.} the mean, is Bayes-risk consistent only when missing features are not informative \cite{2019arXiv190206931J}. More elaborated completion methods were also explored, such as $K$-nearest neighbor estimators, multilayer  or recurrent NNs and others, see \cite{GarciaLaencina:2009in} and references therein. 

However, sequential methods do not fully exploit label information. Data labels can provide valuable information about missing features that could potentially improve the classifier learning process. Recent advances on probabilistic generative models have allowed for a formulation of supervised learning with incomplete features as a statistical inference problem arriving at algorithms that significantly outperformed sequential methods. In the seminal work \cite{Ghahramani:vv}, a framework for maximum likelihood density estimation based on mixtures models was proposed and successfully applied to small incomplete-features problems. In particular, a Gaussian Mixture Model (GMM) was fitted to incomplete-data through an Expectation-Maximization (EM) algorithm. Building upon this generative model strategy, some approaches have considered integrating out the missing values based on a simple logistic regression function \cite{Williams:2005kq,Bhattacharyya:2004ta}. Other versions of this approach proposed an explicit simultaneous learning of the model and the decision function \cite{Liao:2007fd,Dick:2008fj}.
While probabilistic generative models provided a nice and elegant approach to the incomplete-features problem showing good results on small datasets, they are not suitable for many modern machine learning applications because: (1) despite some acceleration techniques were explored, {\it e.g.} \cite{Lin:2006bu,2012arXiv1209.0521D}, those algorithms are computationally expensive becoming prohibitive for moderate to large datasets; (2) GMM is impractical for modeling high-dimensional datasets because the number of parameters to achieve good approximations becomes unmanageable; and (3) they do not consider complex classification functions as the ones provided by deep NN architectures.

Recently, some approaches based on the low-rank property of the features data matrix were investigated and algorithms for data completion were proposed incorporating the label information \cite{Goldberg:2010wg,Hazan:2015tt,2018arXiv180205380H}. Since the rank estimation of a matrix is a computationally expensive task, usually based on the Singular Value Decomposition (SVD), the obtained algorithms are prohibitive to solve modern machine learning problems with large datasets. Additionally, as in the case of the probabilistic generative models, none of these methods considered complex classification functions. To overcome this drawback, more recently, a framework based on various NN architectures such as autoencoders, multilayer perceptrons and Radial Basis Function Networks (RBFNs), was proposed for handling missing input data by setting a probabilistic model, {\it e.g.} a GMM, for every missing feature, which is trained together with the NN weights \cite{Smieja:2018te}. This method combined the great capability of NNs to approximate complex decision functions with the nice formulation of the GMM to model missing data. However, it inherited the drawbacks of GMMs, {\it i.e.} they are not well suited to higher-dimensional datasets.

On the other hand, during the last few years in the signal processing community, there has been a rapid development of theory and algorithms for sparse coding approximations which, by exploiting the redundancy of natural signals, are able to provide simple and accurate models of complex data distributions, see \cite{Lee:2007wn,Mairal:2009ku,LeCun:2010tj,Mallat:2009wr,Eldar:2012wf} and references therein. Sparse coding is nearly ubiquitous in Nature, for example, it is found in the way that neurons encode sensory information \cite{1996Natur.381..607O,Olshausen:1997p405}. Sparse representations of data showed to be useful also  in classification problems. In \cite{Huang:2006ud}, a Linear Discriminant Analysis (LDA) classifier was trained on  corrupted data providing a robust classification method. In \cite{MairalNIPS2008}, algorithms for learning {\it discriminative} sparse models, instead of purely {\it reconstructive} ones, were proposed based on simple linear and bilinear classifiers. Similar methods were also studied by either using class-specific dictionaries \cite{Ramirez:2010jm,Sprechmann:2010jt} or using a single one for all classes \cite{2011ISPM...28...27T}. However, these proposed methods neither were applied to the incomplete-features problem nor considered deep NN classifiers. 

%-------------------------------------------------------------------------
%%%%%%%%%%%%%%%%%%
\subsection{Problem formulation}
\label{sec:definitions}
We assume a supervised learning scenario with vector samples and labels $\{\x_i, y_i\}$, $i=1,2,\dots, I$, $\x_i \in \R^N$ and $y_i \in \{0,1,\dots, C-1\}$ ($C$ classes). However, we are constrained to observe only subsets of features and their labels:  $\{\x^o_i, y_i\}$, $\x^o_i \in \R^{M_i}$ with $M_i < N$. Unobserved (missing) features are denoted by $\x^m_i \in \R^{N-M_i}$. We consider arbitrary patterns of missing features, which are allowed to be different for each data instance $i$. The set of  indices of missing features at sample $i$ is denoted by ${\cal{M}}_i$, {\it i.e.} $\x^m_i = \x_i({\cal{M}}_i)$ and $\x^o_i = \x_i\left({\overline{\mathcal{M}}}_i\right)$.

We define the set of all $K$-sparse vectors $\Sigma_K^P = \{ \s \in \R^P \text{ s.t. } \|\s\|_0 \le K\}$ (containing at most $K$ non-zero entries) and assume that data vectors $\x_i$ admit $K$-sparse representations over an unknown dictionary $\D \in \R^{N\times P}$ ($P \ge N$): 
\begin{equation}\label{eq:sparserep}
\x_i = \D\s_i, \mbox{ with } \s_i \in \Sigma_K^{P}.
\end{equation} 
The columns of a dictionary are called ``atoms'' because every data vector can be written as a linear combination of at most $K$ elementary components. Sometimes dictionaries are orthogonal such as the ones derived from the Discrete Cosine or Wavelet \cite{Mallat:2009wr} transforms. However, overcomplete ($P \ge N$) nonorthogonal dictionaries have demonstrated to play an important role in image processing tasks such as denoising, inpainting, etc \cite{Mairal:2009ku,Jenatton:2010ua}.

By partitioning $\D$ according to the pattern of missing features at sample $i$, we obtain $\D^o_i =\D\left({\overline{\mathcal{M}}}_i,:\right) \in \R^{M_i\times P}$ and $\D^m_i =\D({\cal{M}}_i,:)\in \R^{{(N-M_i)}\times P}$, which according to equation (\ref{eq:sparserep}) implies:
\begin{equation}
\x^o_i = \D^o_i \s_i, \mbox{ and } \x^m_i = \D^m_i \s_i. 
\end{equation}

Let us assume that a perfect classifier, {\it e.g.} a logistic regression or deep NN, that assigns probability $p_{\Theta}(\hat{y} | \x)$ to predicted label $\hat{y}$ given data $\x$ can be trained on the complete dataset $\{\x_i, y_i\}$, such that, in a two-classes scenario ($C=2$),  $p_{\Theta}(\hat{y} = y_i | \x_i) > p_{\Theta}(\hat{y} \neq y_i | \x_i)$,  $\forall i =1,2,\dots, I$, where $\Theta$ is the set of trained parameters. Our goal is to develop a method to obtain an estimate $\hat{\Theta}$ of parameters using only the incomplete dataset $\{\x^o_i, y_i\}$ and to identify conditions under which such a classifier is compatible with the ideal one.

%----------------------------------------------------------------------------------------

\subsection{Why training after imputation is difficult?} \label{sec:limitation}
If the $K$-sparse representations of the observations $\x^o_i$ were unique, then $\x_i$ can be perfectly reconstructed from the incomplete observations and the classifier can be successfully trained using these reconstructions. In the particular case where the dictionary is known in advance, there exist conditions on the sampling patterns based on the {\it coherence}, {\it spark} or {\it RIP (Restricted Isometry Property)} of matrix $\D^o_i$ that can guarantee uniqueness \cite{Eldar:2012wf}. However, these conditions are difficult to meet in practice and determining RIP/Spark properties are NP-hard in general \cite{Weed:2017hra}. Moreover, in the general case where the dictionary $\D$ is unknown and needs to be learned from data, it is even more difficult to obtain well separated reconstructions which certainly leads to suboptimal or wrong classifiers.

Next, we provide some intuition about the limitation of the sequential approach through a toy example. Let us consider the classification of hand-written digit images belonging to two classes: ``3s'' and ``8s'' and assume that they admit $2$-sparse representations over a dictionary. Fig. \ref{fig:toy} (a-b) shows the representations of two example vectors $\x_i$ and $\x_j$ belonging to classes ``3'' and ``8'', respectively. If  only the right halves of the images are observed and no label information is provided, we are clearly faced with a problem because our observed samples from two different classes are identical, {\it i.e.} $\x^o_i = \x^o_j$. It is obvious that at least two possible $2$-sparse representations for the observed data exist as illustrated in Fig. \ref{fig:toy}(c). When the sparse solution is not unique, we may end up reconstructing wrong vectors that could not be even well separated as illustrated in  Fig. \ref{fig:toy} (d-e). In general, sequential methods using only the information of observed features are prone to fail because the non-uniqueness of solutions can make the training of a good classifier an impossible task. However, we could solve this problem by incorporating the labelling information from the very beginning as it is proposed in the following section.

%-------------------------------------------------------------------------------------------

%%%%%%%%%%%%%%%%%%
\section{Simultaneous learning and coding approach}
\label{sec:gen_approach}
We propose to train the classifier and find the proper representation, not only as sparse as possible but also providing the best separation of classes. %In other words, we simultaneously learn the classifier and the sparse coding of observations.
We want to combine the training of the classifier together with the learning of a dictionary and optimal sparse representations such that the reconstructed data vectors are compatible with observations and well separated. To do that we propose to minimize the following global cost function:
\begin{gather}\label{eq:totalcost} 
J(\Theta, \D, \s_i) =   \nonumber\\
\textstyle
\underbrace{\frac{1}{I}\sum_{i=1}^I\big\{ J_0(\Theta, \hat{\x}_i, y_i) + \lambda_{1} J_1(\D,\s_i)\big\}}_{F(\Theta, \D, \s_i)} +  
\underbrace{\frac{1}{I}\sum_{i=1}^I\big\{ \lambda_{2}J_2(\s_i)\big\}}_{G(\s_i)},
\end{gather}
with respect to $\Theta$, $\D$ and $\s_i$ ($i=1,2,\dots, I$), where $\Theta$ contains the classifier parameters, {\it i.e.} the vector of weights in a deep NN classifier architecture; $\D \in \R^{N\times P}$ ($P \ge N$) is a dictionary and $\s_i \in \Sigma_K^{P}$ are the representation coefficients such that the reconstructed data vectors are $\hat{\x}_i = \D\s_i$. 

$J_0(\Theta, \hat{\x}_i, y_i)$ is a measure of the classification error for the reconstructed sample vector $\hat{\x}_i$. Typically, we use the crossentropy measure, {\it i.e.} $J_0(\Theta, \hat{\x}_i, y_i) = -\log [p_{\Theta}(y_i | \hat{\x}_i)]$, where $p_{\Theta}(y_i | \hat{\x}_i)$ is the probability assigned by the classifier to sample $\hat{\x}_i$ as belonging to class $y_i$. 
$J_1(\D,\s_i)$ is a measure of the approximation error of the reconstruction when it is restricted to observed features, which is defined as follows: $J_1(\D,\s_i)=\frac{M_i}{N}\|\m_i \odot (\x_i - \D \s_i)\|^2$, where $\odot$ stands for the entry-wise product, $\m_i \in \R^N$ is the observation mask for sample $i$, {\it i.e.} $m_i(n) = 0$  ($1$) if data entry $\x_i(n)$ is missing (available); and  $J_2(\s_i) = \frac{1}{N}\|\s_i\|_1$ is proportional to the $\ell_1$-norm whose minimization promotes the sparsity of the representation since $\ell_1$-norm is a convenient proxy for $\ell_0$-norm \cite{Candes:2005cs}. Finally, the hyper-parameters $\lambda_1$ and $\lambda_2$ allow us to give more or less importance to the representation accuracy and its sparsity, with respect to the classification error. Intuitively, minimizing equation (\ref{eq:totalcost}) favors solutions that not only have sparse representations compatible with observed features, but also providing reconstructions that are best separated in the given classes. 

\begin{figure}[ht]
%\vskip 0.2in
%\begin{center}
\centerline{\includegraphics[width=0.8\linewidth]{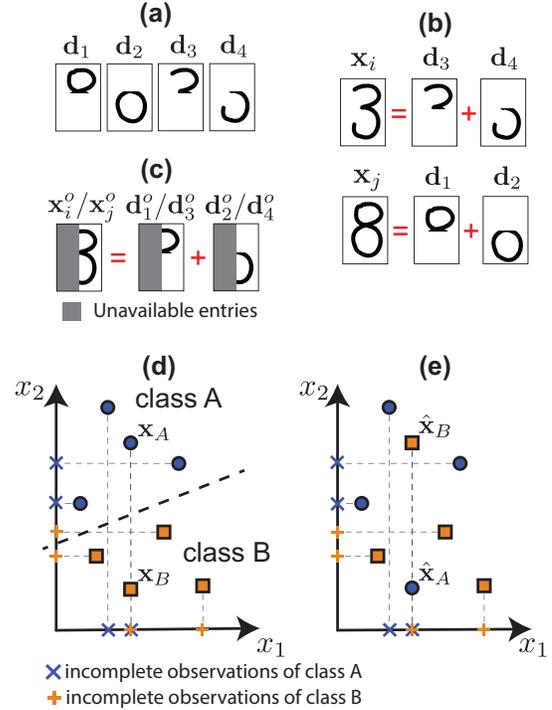}}
\caption{\footnotesize{Toy example: (a) 4 out $P$ dictionary elements $\mathbf{d}_i$ (atoms). (b) Digits ``3'' and ``8'' can be represented by combining only two atoms in the dictionary ($2$-sparse representations). (c) A left-half occluded digit ``3'' or ``8'' admits more than one $2$-sparse representation (sum of $\mathbf{d}^o_1$ and $\mathbf{d}^o_2$, or $\mathbf{d}^o_3$ and $\mathbf{d}^o_4$). (d) Linearly separable samples from two classes (A and B) having two features: $\mathbf{x} = [x_1, x_2]\in \mathds{R}^2$ where incomplete observations are taken by observing only one feature. Note that $\mathbf{x}_A$ and $\mathbf{x}_B$ belong to different classes but their observations are identical. (e) Without using label information, the sequential method could lead to wrong reconstructions of data vectors, {\it i.e.} $\hat{\mathbf{x}}_A\neq \mathbf{x}_A$ and $\hat{\mathbf{x}}_B\neq \mathbf{x}_B$ making the set of reconstructed vectors not linearly separable.}}
\label{fig:toy}
%\end{center}
%\vskip -0.2in
\end{figure}

%%%%%%%%%%%%%%%%%
\subsection{A sparsity-promoting sub-gradient optimization algorithm}\label{sec:algo}
To minimize the cost function in equation (\ref{eq:totalcost}) we propose to alternate between the optimization over $\s_i$ ($i=1,2,\dots, I$) and $\{\Theta,\D\}$ using the training dataset (incomplete). 

For fixed $\{\Theta,\D\}$, the optimization with respect to $\s_i$ is a non-smooth separable minimization sub-problem, which was extensively studied in the literature \cite{Tseng:2007gl,Hale:2008kk}. In this sub-problem, the objective function is written as the sum of $F(\Theta, \D, \s_i)$ and a non-smooth separable function $G(\s_i)$, for which highly specialized, efficient and provable convergent solvers, namely the Coordinate Gradient Descent (CGD), already exists. However, the following key differences in our setting makes it not suitable for the CGD approach: first, our function $F(\Theta, \D, \s_i)$ involves evaluation of a multi-layer NN classifier, which can be non-smooth due to involved activation functions like ReLU or others; second, and more importantly, the computation of its second derivatives (Hessian) becomes prohibitive. Therefore, we choose a simpler and standard first order (stochastic sub-gradient based) search of local minima with back-propagation. We take the strategy similar to the heuristics used in \cite{ShalevShwartz:2011vo}. To update $\s_i$, we need to subtract $\sigma_{\s} \frac{\partial J}{\partial \s_i}(j)$ from each coordinate $j$ provided that we do not cross zero in the process in order to avoid escaping from a region where  $G(\s_i)$ is differentiable. In such a case, we let the new value of $\s_i(j)$ be exactly zero. More specifically, we define $\Delta_i(j)= -\sigma_{\s} \frac{\partial J}{\partial \s_i}(j)$ and, if $\s_i(j)[\s_i(j)+\Delta_i(j)] < 0$ (zero crossing condition), we re-define $\Delta_i(j)=-\s_i(j)$; finally we update $\s_i \leftarrow \s_i + \Delta_i$. It is noted that, once a coefficient $\s_i(j)$ reaches zero at a coordinate $j$, it becomes fixed, in other words, sparsity of solution $\s_i$ is monotonically increasing with iterations.

When $\s_i$ is fixed, our problem is reduced to minimize $F(\Theta, \D, \s_i)$ with respect to $\Theta$ and $\D$, which is easily done by standard first order (stochastic gradient based) search of local minima. The algorithm proposed for the training phase is presented as Algorithm \ref{alg:train}.

In addition, for the testing phase, if the test dataset is incomplete, we need to find first the sparsest representation for the given observations, compute the reconstructions $\hat{\x}_i = \D \s_i$ and then apply the previously learned classifier to them as presented in Supp. material, Algorithm \ref{alg:test}. %On the other hand, when the test dataset is complete, we can just apply our trained classifier. % to it or treat it as if was incomplete. Interestingly, applying the learned classifier to reconstructed test data usually gives better results.

\begin{algorithm}
{\footnotesize
\caption{: \small Simultaneous classification and coding}
\label{alg:train}
\begin{algorithmic}[1]
	\small
	\REQUIRE $\{\x^o_i, y_i\}$, $i=1,2,\dots, I$, hyper-parameters $\lambda_1$ and $\lambda_2$, $N_{iter}$ and update rates $\sigma_{\Theta}$, $\sigma_{\D}$ and $\sigma_{\s}$
	\ENSURE  Weights $\Theta$ and reconstructions $\hat{\x}_i= \D \s_i, \forall i$
	\STATE \texttt{Randomly initialize} $\Theta, \D, \s_i, \forall i$
     		\FOR{$n\le N_{iter}$}
      		\STATE \texttt{Fix} $\s_i$, \texttt{update} $\Theta$ \texttt{and} $\D$:
        		\STATE $\Theta = \Theta - \sigma_{\Theta} \frac{\partial J}{\partial \Theta}$
		\STATE $\D = \D - \sigma_{\D} \frac{\partial J}{\partial \D}$
        		\STATE \texttt{Normalize columns of matrix} $\D$
        		\STATE  \texttt{Fix} $\Theta$ and $\D$, \texttt{update} $\s_i$, $\forall i$:
       	 	\STATE $\Delta_i= -\sigma_{\s} \frac{\partial J}{\partial \s_i}$, $\forall i$ 
        		\IF{$\s_i(j)[\s_i(j)+\Delta_i(j)] < 0$}
                		\STATE $\Delta_i(j)=-\s_i(j), \forall (i, j)$;
       		 \ENDIF
        		\STATE $\s_i = \s_i + \Delta_i$, $\forall i$     
      		\ENDFOR
	\STATE {\bf return} $\Theta, \D, \s_i, \hat{\x}_i = \D\s_i, \forall i$
	\end{algorithmic}	
	}
\end{algorithm}

%%%%%%%%%%%%%%%%%%
\subsection{Theoretical analysis}
\label{sec:theory}
Here, we investigate about conditions under which a perfect classifier of the complete data can be obtained from incomplete data samples. 

\subsubsection{Logistic regression}
Let us first consider a logistic regression classifier \cite{Hastie:2009fg} where the set of parameters $\Theta = \{\w, b\}$ are a vector $\w\in \R^N$ and a scalar $b$ (bias). A perfect classifier exists if there is a hyperplane that separates both classes, {\it i.e.}, for each data vector $\x_i$: $f(\x_i) = \langle \w, \x_i \rangle + b >  0$ if $y_i=1$, and $f(\x_i)  \le 0$ if $y_i=0$. 
We consider data samples admitting a $K$-sparse representations $\x_i = \D\s_i$ with dictionary $\D\in \R^{N\times P}$ having unit-norm columns. We also assume an arbitrary pattern of missing features $\mathcal{M}_i$ such that, data samples and dictionary are partitioned as $\{\x^m_i,\x^o_i \}$ and $\{\D^m_i,\D^o_i \}$, respectively. The following lemma identifies a sufficient condition under which, if we are able to train a classifier on incomplete observations such that the reconstructed data points are well separated by a hyperplane, then the same classifier correctly separates the original (unobserved) data vectors.
\begin{lemma}[Sufficient condition type I] \label{the:main}
Suppose that we have obtained an alternative dictionary $\D' \neq \D \in \R^{N\times P}$ such that, for the incomplete observations $\x^o_i \in \R^{M_i}$, the $K$-sparse representation solutions are non-unique, {\it i.e.} $\exists \s_i, \s_i' \in \Sigma_K^P$ such that $\x^o_i = \D^o_i\s_i = \D'^o_i\s_i'$, where $\s_i \in \R^P$ are the vectors of coefficients of the true data and $\s_i'$  provides reconstructions $\hat{\x}_i = \D'\s_i'$. If a perfect classifier $\{\w, b\}$ of the reconstructions $\hat{\x}_i$ exists s.t. $|f(\hat{\x}_i)| > \epsilon_i > 0$ and
\begin{align}\label{eq:condition}
\epsilon_i > |\langle \w^m_i , \e^m_i\rangle | 
\end{align}
with $\e^m_i = \x^m_i - \hat{\x}^m_i$, then the full data vectors $\x_i$ are also perfectly separated with this classifier, in other words: $f(\x_i) = \langle \w_i, \x_{i} \rangle + b >  0$ ($\le 0$) if $y_i =1$ ($y_i=0$).
\end{lemma}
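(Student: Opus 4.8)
The plan is to compare the classifier's output on the true vector $\x_i$ with its output on the reconstruction $\hat{\x}_i$, and to show that the two outputs differ only by a quantity that condition (\ref{eq:condition}) forces to stay below the available margin $\epsilon_i$. Since the bias $b$ is common to both evaluations, I would start from
\begin{equation*}
f(\x_i) - f(\hat{\x}_i) = \langle \w, \x_i \rangle - \langle \w, \hat{\x}_i \rangle = \langle \w, \x_i - \hat{\x}_i \rangle,
\end{equation*}
and then split this inner product along the missing pattern $\mathcal{M}_i$ into an observed block and a missing block, partitioning $\w$ into $\w^o_i$ and $\w^m_i$ exactly as the data is partitioned.

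The crucial observation is that the reconstruction agrees with the true data \emph{exactly} on the observed coordinates. Indeed, the non-uniqueness hypothesis gives $\x^o_i = \D^o_i\s_i = \D'^o_i\s_i'$, and since $\hat{\x}_i = \D'\s_i'$ we have $\hat{\x}^o_i = \D'^o_i\s_i' = \x^o_i$. Hence the observed block of $\x_i - \hat{\x}_i$ vanishes, and only the missing block $\x^m_i - \hat{\x}^m_i = \e^m_i$ survives the inner product. This yields the clean identity
\begin{equation*}
f(\x_i) = f(\hat{\x}_i) + \langle \w^m_i, \e^m_i \rangle,
\end{equation*}
which reduces the lemma to a statement about a scalar perturbation of $f(\hat{\x}_i)$.

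It then remains to verify that adding $\langle \w^m_i, \e^m_i \rangle$ cannot flip the sign of $f$, which I would argue by cases on the label. If $y_i = 1$, perfect separation of the reconstructions gives $f(\hat{\x}_i) > \epsilon_i$, while condition (\ref{eq:condition}) gives $\langle \w^m_i, \e^m_i \rangle \ge -|\langle \w^m_i, \e^m_i \rangle| > -\epsilon_i$, so that $f(\x_i) > \epsilon_i - \epsilon_i = 0$; the case $y_i = 0$ is symmetric, with $f(\hat{\x}_i) < -\epsilon_i$ and $\langle \w^m_i, \e^m_i \rangle < \epsilon_i$ forcing $f(\x_i) < 0$. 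This is precisely the separation property required for the full data. I do not expect a genuine technical obstacle here: the entire content is the localization of the discrepancy to the missing features, which follows directly from the matching of observed features, and the remainder is a margin/triangle-inequality argument. The only points demanding care are the sign bookkeeping in the two label cases and making explicit the convention that $\w$ is partitioned by $\mathcal{M}_i$ in the same way as $\x_i$, so that $\langle \w^m_i, \e^m_i\rangle$ is indeed an inner product over the missing coordinates alone.
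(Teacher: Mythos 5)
Your proof is correct and takes essentially the same route as the paper's: both derive the identity $f(\x_i) = f(\hat{\x}_i) + \langle \w^m_i, \e^m_i\rangle$ from the exact agreement of reconstruction and data on the observed coordinates ($\hat{\x}^o_i = \D'^o_i\s_i' = \x^o_i$), and then conclude via the margin condition (\ref{eq:condition}). Your write-up is in fact slightly more complete, since it spells out both label cases, whereas the paper treats only $y_i = 1$ explicitly and leaves the symmetric case implicit.
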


\begin{proof}
By using the missing/observed partition and omitting the sample index $i$, we can write: $f(\x) = \langle \w, \x \rangle + b = \langle \w^{o}, \x^o \rangle + \langle \w^{m}, \x^m \rangle + b$. If we add and subtract the term $\langle \w^{m}, \hat{\x}^m \rangle$ on the right left hand, arrange terms and use the fact that $\hat{\x}^o = \D'^o\s' = \x^o$, we get:
\begin{equation}
f(\x) 	 = f(\hat{\x}) + \langle \w^{m}, \e^m \rangle.
\end{equation}
Since we assumed that $f(\hat{\x}) > \epsilon >0$ (for $y_i = 1$) and $|\langle \w^{m}, \e^m \rangle| < \epsilon$, it implies that $f(\x) > 0$.
\end{proof}
Basically, condition (\ref{eq:condition}) means requiring that reconstruction vector $\hat{\x}$ has a distance $\epsilon$ to the separating hyperplane larger than  the absolute dot product between $\w$ and the residual $ \e = \x - \hat{\x}$, which of course is true when the reconstruction is accurate, {\it i.e.} $\x \approx \hat{\x}$. However, in practice, reconstructions are not accurate so we are interested in conditions under which Lemma \ref{the:main} can still holds. Below, we derive a more restrictive but useful sufficient condition:
\begin{proposition}[Sufficient condition type II]\label{the:sufcondI}
Under the same hypothesis of Lemma \ref{the:main}, the following condition is enough to guarantee a proper classifier trained on incomplete data:
\begin{equation}\label{eq:sufficient}
\epsilon > |\langle \w^{m}, \x^m\rangle | +  |\langle \w^{m}, \hat{\x}^m \rangle|.
\end{equation}
\end{proposition}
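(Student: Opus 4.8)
The plan is to deduce Proposition~\ref{the:sufcondI} directly from Lemma~\ref{the:main} by showing that hypothesis~(\ref{eq:sufficient}) is strictly stronger than hypothesis~(\ref{eq:condition}). Since both statements share the same setup (non-unique $K$-sparse representations, and a perfect classifier of the reconstructions with margin $|f(\hat{\x}_i)| > \epsilon_i > 0$), it suffices to verify that any configuration $(\w, \x^m, \hat{\x}^m)$ satisfying~(\ref{eq:sufficient}) also satisfies $\epsilon_i > |\langle \w^m_i, \e^m_i\rangle|$, and then invoke Lemma~\ref{the:main} verbatim to obtain the separation of the full data vectors.

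The key step is the triangle inequality applied to the residual. Recalling that $\e^m = \x^m - \hat{\x}^m$, linearity of the inner product in its second argument gives
\begin{equation*}
\langle \w^{m}, \e^m \rangle = \langle \w^{m}, \x^m \rangle - \langle \w^{m}, \hat{\x}^m \rangle,
\end{equation*}
and therefore $|\langle \w^{m}, \e^m \rangle| \le |\langle \w^{m}, \x^m \rangle| + |\langle \w^{m}, \hat{\x}^m \rangle|$. Chaining this with the assumed bound~(\ref{eq:sufficient}) yields $\epsilon > |\langle \w^{m}, \e^m \rangle|$, which is exactly condition~(\ref{eq:condition}). The conclusion of Lemma~\ref{the:main}---namely $f(\x_i) > 0$ for $y_i = 1$ and $f(\x_i) \le 0$ for $y_i = 0$---then follows immediately.

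There is essentially no hard step here: the entire content is the one-line triangle inequality above, so the statement is a corollary of the lemma rather than an independent result. The only thing worth flagging is \emph{why} one would want this weaker (more restrictive) condition at all, and I would emphasize this point after the formal argument. The reason is practical: the quantities $|\langle \w^{m}, \x^m\rangle|$ and $|\langle \w^{m}, \hat{\x}^m\rangle|$ can be controlled or estimated separately---the second term depends only on the reconstruction $\hat{\x}$ actually produced by the algorithm, while the first can be bounded using a priori assumptions on the missing block---whereas the residual $\e^m$ appearing in~(\ref{eq:condition}) couples the true missing features $\x^m$ with the reconstruction jointly. Trading the tight bound $|\langle \w^{m}, \e^m\rangle|$ for the looser sum is precisely what makes the condition usable in the regime where reconstructions are inaccurate, which is the case of interest noted just before the proposition.
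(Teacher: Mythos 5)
Your proof is correct and is essentially identical to the paper's own argument: both apply the triangle inequality to $|\langle \w^{m}, \e^m \rangle| = |\langle \w^{m}, \x^m - \hat{\x}^m \rangle| \le |\langle \w^{m}, \x^m \rangle| + |\langle \w^{m}, \hat{\x}^m \rangle|$ and then invoke Lemma~\ref{the:main}. Your added remarks on why the looser condition is practically useful are a reasonable supplement but not part of the proof itself.
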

\begin{proof}
By using the fact that $|\langle \w^{m}, \e^m \rangle| = |\langle \w^{m}, \x^m - \hat{\x}^m \rangle| \le |\langle \w^{m}, \x^m \rangle| + |\langle \w^{m}, \hat{\x}^m \rangle|$, and applying Lemma \ref{the:main} the proof is completed.
\end{proof}

We highlight that, in our experiments, we were able to verify that Sufficient Condition type II is met in practice (see section \ref{sec:experiments}, Fig. \ref{fig:cond}). 

In Supp. material section \ref{sec:RIP}, we derive an additional sufficient condition based on the Restricted Isometry Property (RIP) of the dictionary $\D$ and sparsity level $K$, showing that sufficient condition (\ref{eq:sufficient}) is easier to hold for datasets admitting highly sparse representations on dictionaries as close to orthogonal ones as possible.

\subsubsection{Multilayer-perceptron}
Lemma \ref{the:main} can be straightforwardly generalized to multilayer-perceptron NNs where, if a softmax function is used at the output of the last layer then, as before, the prediction is based on the sign of the linear function:
\begin{equation}
\small
f(\x) = \langle \w, \x^{(L)}\rangle + b, \mbox{    with } \x^{(l)} = h\left( \W^T_l \x^{(l-1)} + \B_l\right),
\end{equation}
$l=1,2,\dots,L$, where $L+1$ is the total number of layers, $N_l$ is the number of neurons in layer $l$, $\w\in \R^{N_{L+1}}$ contains the weights in the last layer, $h(\cdot)$ is an activation function, {\it e.g.} ReLU, $\W_l\in \R^{N_{l-1}\times N_l}$ and $\B_l \in \R^N_l$ contain the weights and biases associated to neurons at layer $l$; and $\x^{(0)} = \x$ is the input data vector. In this case, the first layer matrix $\W_1 \in \R^{N\times N_1}$ can be partitioned into submatrices $\W^o_{1i} \in \R^{M \times N_1}$ and $\W^m_{1i} \in \R^{(N-M) \times N_1}$ according to the observed and missing input features, respectively.

\begin{proposition} []\label{the:multilayer}
Under the same conditions of Lemma \ref{the:main}, if a NN-classifier $\{{\W_l,\B_l} (l=1,2,\dots L),\w, b, h=ReLU\}$ of the reconstruction $\hat{\x}_i$ exists such that 
\begin{equation}\label{eq:condition2}
\epsilon_i  > A \max_j |\langle \W^m_{1i}(:,j)  , \e^m_i\rangle | ,
\end{equation} 
where $A = \|\w\| \prod_{l=2}^L \|\W_l \|_2$ and $\e^m_i = \x^m_i - \hat{\x}^m_i$, then the full data vector $\x_i$ is also perfectly separated, in other words: $f(\x_i)  >  0$ ($< 0$) if $y_i =1$ ($y_i=0$).
\end{proposition}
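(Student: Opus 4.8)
The plan is to follow the same template as the proof of Lemma~\ref{the:main}: instead of decomposing $f(\x)$ exactly (impossible once the ReLU nonlinearities enter), I would bound the discrepancy $|f(\x)-f(\hat{\x})|$ between the network evaluated at the true input and at its reconstruction, and then show that under hypothesis~(\ref{eq:condition2}) this discrepancy is strictly smaller than the margin $\epsilon_i$, so that $f(\x_i)$ inherits the sign of $f(\hat{\x}_i)$. Writing $\x^{(0)}=\x$ and $\hat{\x}^{(0)}=\hat{\x}$ and omitting the sample index, the crucial observation (exactly as in the linear case) is that $\x$ and $\hat{\x}$ coincide on the observed coordinates and differ only on the missing ones, with $\x^m-\hat{\x}^m=\e^m$.

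Concretely, I would propagate this perturbation layer by layer. At the input layer the pre-activation difference is
\begin{equation}
\z^{(1)}-\hat{\z}^{(1)} = \W_1^T\x-\W_1^T\hat{\x} = (\W^m_1)^T(\x^m-\hat{\x}^m) = (\W^m_1)^T\e^m ,
\end{equation}
since the observed block $(\W^o_1)^T\x^o$ and the bias $\B_1$ cancel; its $j$-th entry is precisely $\langle \W^m_1(:,j),\e^m\rangle$. Next I would invoke the two structural facts that produce the constant $A$: (i) ReLU is $1$-Lipschitz and acts entry-wise, so $\|\x^{(l)}-\hat{\x}^{(l)}\|_2 \le \|\z^{(l)}-\hat{\z}^{(l)}\|_2$ at every layer; and (ii) each linear map obeys $\|\W_l^T v\|_2 \le \|\W_l\|_2\,\|v\|_2$. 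Chaining these from layer $2$ to $L$ gives $\|\x^{(L)}-\hat{\x}^{(L)}\|_2 \le \big(\prod_{l=2}^L\|\W_l\|_2\big)\,\|\z^{(1)}-\hat{\z}^{(1)}\|_2$, and the final linear read-out contributes a factor $\|\w\|$ via Cauchy--Schwarz, so that
\begin{equation}
|f(\x)-f(\hat{\x})| = |\langle \w, \x^{(L)}-\hat{\x}^{(L)}\rangle| \le \|\w\|\prod_{l=2}^L\|\W_l\|_2\;\|(\W^m_1)^T\e^m\| .
\end{equation}
Here the product starts at $l=2$ because the first-layer perturbation is measured directly through $(\W^m_1)^T\e^m$ rather than through $\|\W_1\|_2$. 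Recognizing $A=\|\w\|\prod_{l=2}^L\|\W_l\|_2$ and the first-layer factor $\|(\W^m_1)^T\e^m\|$, whose entries are the quantities $\langle \W^m_1(:,j),\e^m\rangle$ appearing in~(\ref{eq:condition2}), reproduces the structure of the right-hand side. The conclusion is then identical to that of Lemma~\ref{the:main}: if $f(\hat{\x})>\epsilon$ for $y_i=1$ and $|f(\x)-f(\hat{\x})|<\epsilon$, then $f(\x)>0$, and symmetrically for $y_i=0$.

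The step I expect to require the most care is reconciling the norm used at the first layer. The clean Lipschitz chain naturally produces the Euclidean norm $\|(\W^m_1)^T\e^m\|_2$, whereas condition~(\ref{eq:condition2}) is phrased through the largest entry $\max_j|\langle \W^m_1(:,j),\e^m\rangle| = \|(\W^m_1)^T\e^m\|_\infty$. Since $\|v\|_2 \le \sqrt{N_1}\,\|v\|_\infty$, passing between the two may cost a factor $\sqrt{N_1}$, so I would either absorb this dimensional constant into $A$, restate the first-layer factor directly in $\ell_2$ as $\|(\W^m_1)^T\e^m\|_2$ (equivalently bounded by $\|\W^m_1\|_2\,\|\e^m\|_2$), or check that an entry-wise $\ell_\infty$ propagation can be carried through the remaining layers with an appropriate induced matrix norm. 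Apart from this norm bookkeeping, the ingredients --- cancellation of the observed block, the $1$-Lipschitz property of ReLU, submultiplicativity of the spectral norm, and the final sign argument borrowed from Lemma~\ref{the:main} --- are all routine.
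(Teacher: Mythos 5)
Your proposal follows essentially the same route as the paper's proof: propagate the input perturbation layer by layer, control each ReLU layer by (in your case) $1$-Lipschitzness and each linear map by its spectral norm, peel off the first layer where the perturbation is supported only on the missing coordinates, and conclude with the sign argument of Lemma~\ref{the:main}. What deserves emphasis is that the issue you flag as ``norm bookkeeping'' is not a loose end in your write-up only --- it is exactly the point where the paper's own proof breaks. The paper ends its chain with
\begin{equation*}
\langle \w, \delta^{(L)}\rangle \;\le\; A\,\big\| (\W^m_{1})^T\e^m \big\|_2 \;\le\; A\,\max_j \big|\langle \W^m_{1}(:,j),\e^m\rangle\big|,
\end{equation*}
and the second inequality compares the $\ell_2$ norm of the vector $(\W^m_1)^T\e^m$ (whose entries are the $\langle \W^m_1(:,j),\e^m\rangle$) with its $\ell_\infty$ norm in the wrong direction: $\|v\|_2 \ge \|v\|_\infty$ for every vector $v$, with equality only when $v$ has at most one nonzero entry. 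Consequently condition~(\ref{eq:condition2}) as stated is not established by this argument; one needs either the dimensional factor $\sqrt{N_1}$ absorbed into $A$ (so the hypothesis reads $\epsilon_i > A\sqrt{N_1}\max_j |\langle \W^m_{1i}(:,j),\e^m_i\rangle|$) or the hypothesis rephrased with the $\ell_2$ quantity $\|(\W^m_{1i})^T\e^m_i\|_2$ --- precisely the two repairs you propose. With either one, your proof is complete, and it is in fact tidier than the paper's in two respects: the paper derives its layer-wise bound from the sub-additivity $h(a+b)\le h(a)+h(b)$, which yields only a one-sided entrywise bound on the propagated error (not a bound on its magnitude), whereas $1$-Lipschitzness gives the needed two-sided bound directly; and the paper drops the absolute value on $\langle \w, \delta^{(L)}\rangle$, which is required to handle the class $y_i=0$.
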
 

In the proof of Lemma \ref{the:main}, we were interested in finding a bound of the output error when the input $\x$ of a classifier is perturbed, {\it i.e.} we found conditions such that $|f(\x) - f(\x + \mathbf{\delta})| < \epsilon$. By generalizing the classifier to the case of a multilayer perceptron we can derive the proof as follows:
\begin{proof}
Given a perturbation $\mathbf{\delta}^{(l-1)}\in \R^{N_l}$ at the input of layer $l-1$, {\it i.e.} $\hat{\x}^{(l-1)} = \x^{(l-1)} + \mathbf{\delta}^{(l-1)}$, it is propagated to the output of layer $l$. By writing the error at the output we obtain:
\begin{equation}
\scriptstyle
\mathbf{\delta}^{(l)} = h\left( \W_l^T \x^{(l-1)} + \B_l + \W_l^T \mathbf{\delta}^{(l-1)} \right) - h\left( \W_l^T \x^{(l-1)} + \B_l  \right),
\end{equation}
and, by using the sub-additivity of ReLU function $h(\cdot)$, {\it i.e.} $h(a+b) \le h(a) + h(b)$, we derive the following entry-wise inequality:
\begin{equation}
\small
\mathbf{\delta}^{(l)} \le h \left(  \W_l^T \mathbf{\delta}^{(l-1)} \right),
\end{equation}
and, by considering the property of ReLU activation function $\|h(\x)\| \le \|\x\|$, it turns out:
\begin{equation}\label{eq:prop1}
\small
\|\mathbf{\delta}^{(l)}\| \le \| \W_l^T \mathbf{\delta}^{(l-1)} \|,
\end{equation}

Since the last layer of the NN is a linear classifier as in the case of Lemma \ref{the:main}, we can ask that $ \langle \w, \mathbf{\delta}^{(L)}\rangle < \epsilon$. Thus, by recursively using equation (\ref{eq:prop1}), we write
\footnotesize
\begin{equation} \label{eq:inner}
\langle \w, \mathbf{\delta}^{(L)}\rangle \le \|\w\| \| \mathbf{\delta}^{(L)}\| \le \|\w\| \|\W_L\|_2 \|\W_{l-1}\|_2 \cdots \|\W_2\|_2 \|\mathbf{\delta}^{(1)} \|.
\end{equation}
\normalsize
By defining $A = \|\w\| \prod_{l=2}^L\|\W_l\|_2$, evaluating equation (\ref{eq:prop1}) with $l=1$ and taking into account that perturbation at the input of first layer is $\mathbf{\delta}^{(0)} = \e$ with $\e^o = \mathbf{0}$, we arrive at:
\begin{equation}
\small
\langle \w, \mathbf{\delta}^{(L)}\rangle \le A \| \W_1^{mT} \e^m\| \le A \max_j |\langle \W^m_{1}(:,j)  , \e^m \rangle | < \epsilon,
\end{equation}
which completes the proof.

\end{proof}

It is interesting to note that $A=1$ is attained when unit-norm filters (columns of $\mathbf{W}_l$) are orthogonal, which can be imposed by using orthogonality regularization \cite{Bansal:2018}.

\section{Experimental results}
\label{sec:experiments}
We implemented all the algorithms in Pytorch 1.0.0 on a single GPU. Implementation details are reported in Supplemental material, sections \ref{sec:implementation} and \ref{sec:hyperparameters}. The code is available at \footnote{https://github.com/ccaiafa/SimultCodClass}.

{\bf Synthetic datasets:}
We synthetically generated $I=11,000$ ($10,000$ training $+$ $1,000$ test) $K$-sparse data vectors $\x_i \in \R^{100}$ using a dictionary $\D\in \R^{100 \times 200}$ obtained from a Gaussian distribution with normalized atoms, {\it i.e.} $\|\D(:,j)\|=1, \forall j$. A random hyperplane $\{\w, b\}$ with $\w\in \R^N$, $b\in \R$ was randomly chosen dividing data vectors into two classes according to the sign of the expression $\langle \w, \x_i \rangle +b$, which defined the label $y_i$. We also controlled the degree of separation between classes by discarding all data vectors with distances to the hyperplane lower than a pre-specified threshold, {\it i.e.} $|\langle \w, \x_i \rangle +b| < d$. We used $n=10$ repetitions of each experiment with different masks and input data in order to compute statistics. 

We applied our simultaneous method ({\bf Simult.}) with hyperparameters $\lambda_1$ and $\lambda_2$ in the cost function (\ref{eq:totalcost}) tuned via cross-validation to train a logistic regression classifier on incomplete datasets with randomly distributed missing features. Then, we computed the classification accuracy on the complete test dataset and compared the results against the following standard sequential methods: \\
{\bf Sequential Sparsity based (Seq. Sp.)}: reconstructions are obtained by finding the sparsest representation compatible with the observations solving a LASSO problem. We used Algorithm \ref{alg:seq} as shown in  the Supp. material;  \\
{\bf Zero Fill (ZF)}: missing features are filled with zeros, which is equivalent to ignore unknown values;\\
{\bf Mean Unsupervised (MU)}: missing features are filled with the mean computed on the available values;\\
{\bf Mean Supervised (MS)}: as in the previous case but the mean is computed on the same class vectors only;\\
{\bf K-Nearest Neighbor (KNN)}: as in the previous case but the mean is computed on the K-nearest neighbors of the same class vectors only. 

To compare the performance of classifiers, we computed the mean accuracy $\pm$ standard error of the mean (s.e.m.), with $n=10$, on complete test datasets using all the methods for two levels of separation between classes ($d=0.0, 0.2$), two levels of sparsity ($K=4,32$) and missing features in the training dataset ranging from $25\%$ to $95\%$ as shown in Fig. \ref{fig:synth}. Our results show that the simultaneous algorithm clearly outperforms all the sequential methods. A   t-test was performed to evaluate the statistical significance with $p < 0.05$ of the difference between our algorithm and MS. It is interesting to note that, when classes has some degree of separation ($d=0.2$), using the simple MS method, can give good results but not better than our algorithm. %However, for very tight classes ($d=0.0$), our algorithm performed much better than any other considered method.

\begin{figure}
%\begin{center}
 \centerline{\includegraphics[width=0.9\linewidth]{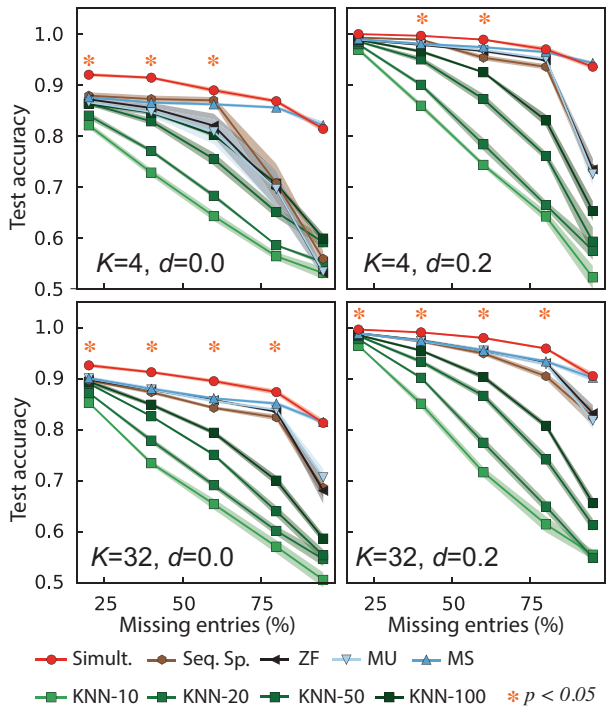}}
  \caption{\footnotesize Experimental results on synthetic dataset with random masks using our algorithm (red) and compared to various sequential methods. Test accuracy (mean $\pm$ s.e.m with $n=10$) is shown as a function of the percentage of missing features for separation of classes $d=0.0, 0.2$ and levels of sparsity $K=4,32$. Statistical significance for the difference between Simult. and MS is shown ($p < 0.05$).
}
\label{fig:synth}
%\end{center}
\end{figure}

In the second experiment, we generated $I =10,000$ $K$-sparse data vectors $\x_i \in \R^{100}$ using $\D\in \R^{100 \times 100}$ and we evaluated the sufficient condition of equation (\ref{eq:sufficient}) on $n=10$ repetitions of the experiment with $95\%$ missing features and separation $d=0.0$. Fig. \ref{fig:cond} clearly shows that the sufficient condition is mostly met in practice, especially for highly sparse representations of input data (small $K$). This means that in practice it is not necessary to accurately reconstruct the input vectors, it is enough to capture the intrinsic characteristics of the classes such that the distances of reconstructions to the separating hyperplane satisfy the sufficient condition (\ref{fig:cond}).

\begin{figure}
%\begin{center}
 \centerline{\includegraphics[width=0.9\linewidth]{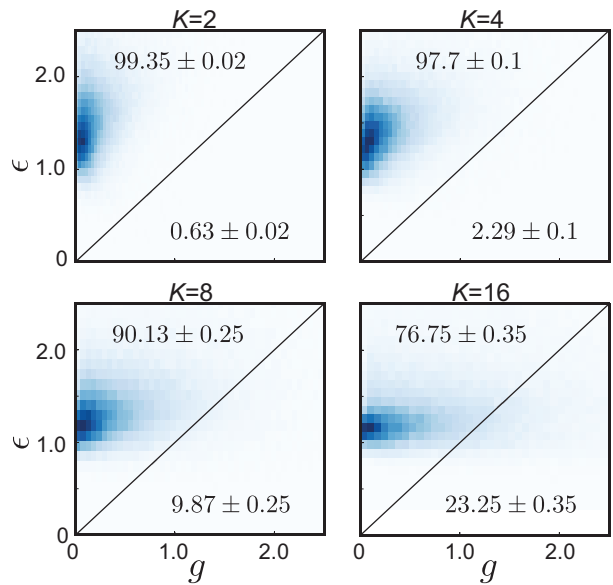}}
  \caption{\footnotesize Verification of the sufficient condition (\ref{eq:sufficient}) for various levels of sparsity $K$: 2D-histogram of $\epsilon$ versus $g = |\langle \w^{m}, \x^m\rangle | +  |\langle \w^{m}, \hat{\x}^m \rangle|$. Mean + s.e.m ($n=10$) percentage of correctly classified data samples are shown for $\epsilon > g$ and $\epsilon < g$.
}
\label{fig:cond}
%\end{center}
\end{figure}

%%%%%%%%%%%%%%%%%%
{\bf Benchmark datasets:}
We also considered three popular computer vision datasets: MNIST \cite{LeCun:1989vp} and Fashion \cite{2017arXiv170807747X} consisting of 70,000 images (60,000 train + 10,000 test) each; and CIFAR10 \cite{Krizhevsky:2009tr} having 60,000 images (50,000 train + 10,000 test). MNIST/Fashion datasets contains $28\times 28$ gray scale images while CIFAR10 dataset is built upon $32\times 32\times 3$ color images of different objects.
The corresponding data sample size is $N=28\times 28 = 784$ for MNIST/Fashion and $N=32\times32\times 3 = 3,072$ for CIFAR10. We considered a dictionary of size $784\times 784$ (MNIST/Fashion) and $1,024\times1,024$ (CIFAR10) and applied our simultaneous algorithm to learn the classifier on incomplete data using uniform random missing masks with several levels of missing data (25\%, 50\% and 75\%) and 50\% for random partial occlusions with MNIST/Fashion. 

We used a logistic regression classifier (single layer NN) and a 4-layer convolutional neural network \cite{LeCun:1999uh} (CNN4) for the MNIST/Fashion dataset using batch normalization (BN) \cite{2015arXiv150203167I} in the Fashion dataset. For CIFAR10 dataset, an 18-layer residual neural network, Resnet-18 \cite{He:2016ib} was implemented. We did not use any data augmentation strategy. The hyper-parameters $\lambda_1$ and $\lambda_2$ in cost function (\ref{eq:totalcost}) were adjusted by cross-validation through a grid-search, as shown in Supp. material (Table \ref{tab:crossval} and Fig. \ref{fig:hypertuning}). We compared our proposed algorithm with the following standard sequential methods: ZF, MS, KNN-10, KNN-20, KNN-50 and KNN-100; and against the recently proposed method from \cite{Smieja:2018te}, referred here as NN-GMM, which uses the same NN classifier as in our method and models missing features through GMM\footnote{\scriptsize https://github.com/lstruski/Processing-of-missing-data-by-neural-networks}. We trained the classifiers on incomplete data with random masks and tested them on complete data for MNIST and CIFAR10 datasets. The obtained mean Test Accuracy $\pm$ s.e.m ($n=10$) are reported in Table \ref{tab:compreal}. It is noted that NN-GMM provided good results with MNIST dataset compared to sequential methods, however, our simultaneous method outperformed all the methods. Interestingly, NN-GMM performed worst than any other method with CIFAR10 dataset. It seems that NN-GMM is not robust to large amount of missing data because, when we reduced the missing entries to $10\%$, the test accuracy sensibly increased to $52.57\%$. Additionally, our method showed to have little variability (small s.e.m) compared to the second best method (NN-GMM for MNIST and ZF for CIFAR10). 

\begin{table*}[ht]
\caption{\footnotesize {Test accuracy (mean $\pm$ s.e.m with $n=10$) of various methods trained on incomplete data and tested on complete ones for MNIST and CIFAR10.}}
\centering
\scalebox{0.85}{
\begin{tabular}{ |c|c|c|c|c|c|c|c|c|}
 \hline
 \multicolumn{9}{|c|}{\bf MNIST (CNN4)} \\ \hline
 \bf  Miss. &  ZF 		    	&  MS 				&  KNN10 			&  KNN20 			&  KNN50 			&  KNN100 		&  NN-GMM 			&  Simult. 			\\ \hline
 \bf  75\% &  $84.86 \pm 0.02$ 	&  $83.79 \pm 0.01$		&  $88.16 \pm 0.01$		&  $87.94 \pm 0.01$		&  $87.03 \pm 0.002$	&  $86.52 \pm 0.01$	& $ 96.36 \pm 0.12$ 		& $\bf 98.09  \pm 0.04$		\\ \hline
 \bf  50\% &  $90.13 \pm 0.06$	&  $88.55 \pm 0.01$		&  $91.36 \pm 0.02$		&  $91.11 \pm 0.02$		& $90.87 \pm 0.01$		&  $90.82 \pm 0.01$	&  $ 97.57 \pm 0.37$ 	& $\bf 98.23  \pm 0.10$		\\ \hline \hline
 \multicolumn{9}{|c|}{\bf CIFAR10 (Resnet18)} \\ \hline
  \bf  Miss. &  ZF 			&  MS 				&  KNN10 			&  KNN20 		&  KNN50 			&  KNN100 			&  NN-GMM 			&  Simult. \\ \hline
 \bf  75\% &  32.22 $\pm$ 2.09 	&  21.30 $\pm$ 0.40		&  22.84 $\pm$ 0.87		&  25.67 $\pm$ 0.80	&  26.52 $\pm$ 0.70		&  26.01 $\pm$ 0.52		& $12.10 \pm 0.61$		& $\bf 54.81 \pm 0.47 $	\\ \hline
 \bf  50\% &  46.37 $\pm$ 1.93	&  17.90 $\pm$ 0.94		&  30.94 $\pm$ 0.54		&  29.68 $\pm$ 0.46	& 30.01 $\pm$ 0.51		&  26.23 $\pm$ 1.01		&  $14.02 \pm 0.75$		& $\bf 62.50 \pm  0.95$	\\ \hline
\end{tabular}
}
\label{tab:compreal}
\end{table*}

In Table \ref{tab:AccuTest}, test accuracies obtained when the learned model is applied to incomplete and complete test data, are shown. The right-most column shows the baseline results obtained by training the model on complete datasets using a CNN4 \cite{LeCun:1999uh} and a  Resnet-18 \cite{He:2016ib}, whose implementations can be found at \footnote{\scriptsize  https://github.com/pytorch/examples/tree/master/mnist} and \footnote{\scriptsize https://github.com/kuangliu/pytorch-cifar}. It is interesting to note that for the logistic regression classifier, we obtained better results when training with incomplete data rather than using complete data. Also, it is highlighted that training on incomplete data with 50\% or fewer random missing features, provides similar test accuracy as training on complete data for MNIST dataset. This could be explained by noting two facts: (1) random missing features is similar to applying dropout, with the exception that missing data do not change during training; and (2) our model has more parameters (Dictionary + sparse coefficients + Linear layer) compared to the baseline logistic regression classifier. To provide a deeper understanding of this effect, we ran the baseline with Dropout at the input and we obtained: 91.95\%, 91.97\% and 92.01\% for $p=0.0$, $0.1$ and $0.25$, respectively, which shows that the improvement we obtained with our method is not solely caused by a dropout alike behavior. 

\begin{table*}[ht]
\centering
\caption{\footnotesize{Test Accuracies obtained with our method on MNIST, Fashion and CIFAR10 datasets training with incomplete data and testing on incomplete/complete data. Baseline results obtained by training the models on complete data are shown for reference in the right-most column.}}
\scalebox{0.85}{
\begin{tabular}{ |c|c|c|c|c|c|c|c|c|c|c|  }
 \hline
 \multirow{3}{3em}{\bf Dataset} &  \multirow{3}{4em}{\bf Classifier} & \multicolumn{6}{|c|}{\bf Random missing features} & \multicolumn{2}{|c|}{\bf Occlusion} & \bf Baseline \\
  &   & \multicolumn{6}{|c|}{\bf \%Train / \%Test} & \multicolumn{2}{|c|}{\bf \%Train / \%Test} &\bf \%Train / \%Test \\ \cline{3-11}
 & 									& \bf 75/75 	& \bf 50/50 	&\bf  25/25 	&\bf  75/0 		&\bf  50/0 		&\bf  25/0 		&\bf  50/50 	&\bf  50/0		&\bf  0/0 \\
 \hline
 \multirow{2}{3em}{\bf MNIST}  & \bf Log. Reg. 	& $90.45$ 	& $93.68$ & 	$94.14$ & 	$91.94$ 		& $93.44$ 	& $94.43$ 	& - 			& - 			& $91.95$ \\ \cline{2-11}
 &\bf  CNN4 							& $94.62$ 	& $98.34$ 	& $98.94$ 	& $98.09$ 	& $98.23$ 	& $98.95$ 	& $88.55$ 	& $91.37$ 	& $98.95$ \\
 \hline
 \bf Fashion & \bf CNN4+BN 				& $83.71$ 	& $86.09$ 	& $86.38$ 	& $86.39$ 	& $87.11$ 	& $87.04$ 	& $81.73$ 	& $82.47$ 	& $90.76$ \\
 \hline
 \bf CIFAR10 & \bf Resnet18 				& $53.82$ & 	$61.08$ 		& $63.73$ 	& $54.81$ 	& $62.50$ 	& $63.87$ 	& - 			& - 			& $80.13$ \\
 \hline
\end{tabular}
}
\label{tab:AccuTest}
\end{table*}

In Fig. \ref{fig:examples}, we present some randomly selected visual examples comparing the original images in the MNIST/Fashion test dataset, their observations using random masks and partial occlusions, and the reconstructions using the dictionary learned from the incomplete training data. It is clear that, despite the reconstructions may be not very similar to the original images (see ``5'' digit example), they clearly own the properties of the class to which they belong to. Additional examples are provided in Supp. material (Figs. \ref{fig:moreexamplesMNIST} and \ref{fig:moreexamplesFashion}).

\begin{figure}
%\begin{center}
 \centerline{\includegraphics[width=1.0\linewidth]{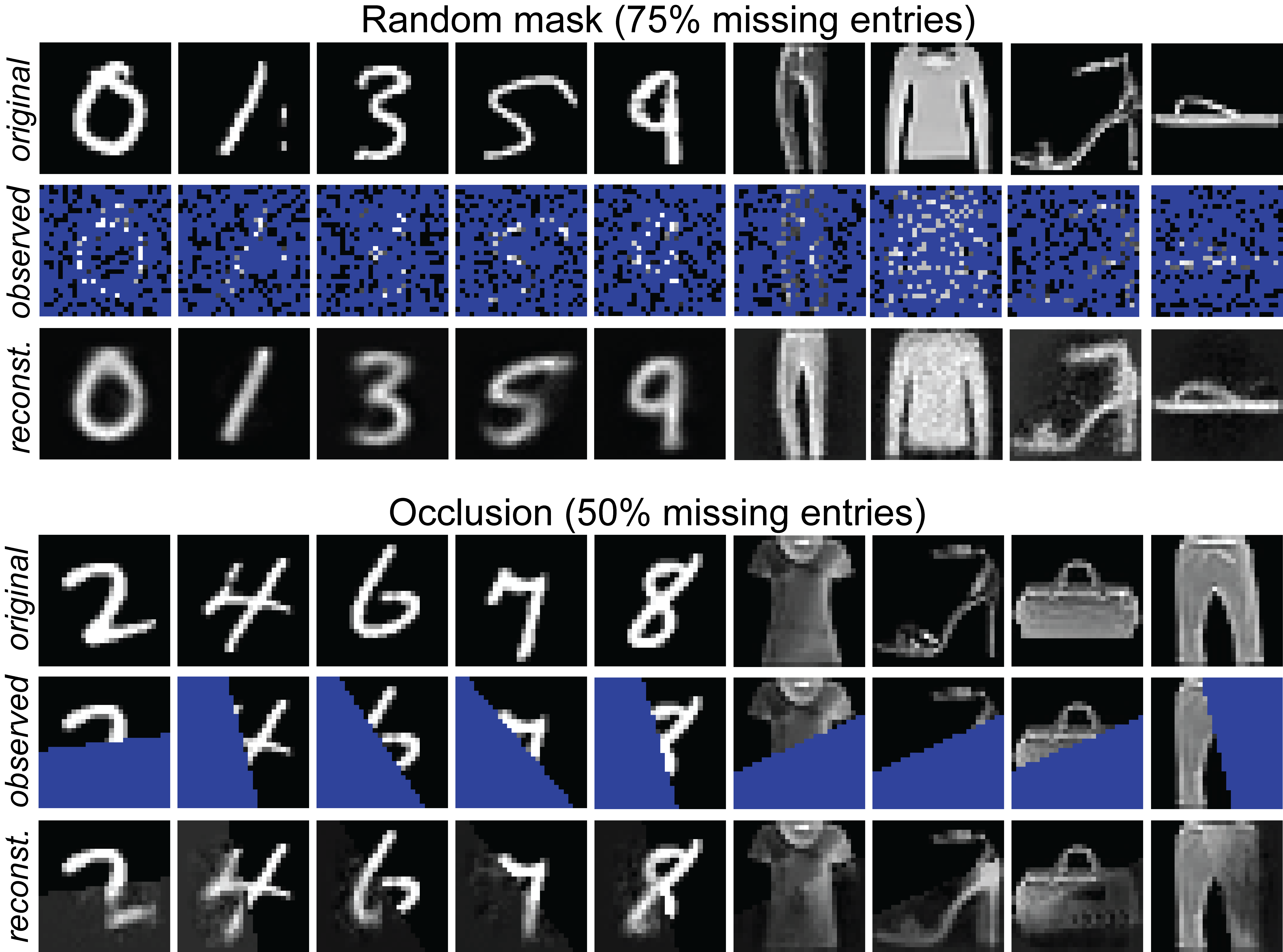}}
  \caption{\footnotesize Original (top), observed (middle) and reconstructed (bottom) MNIST and Fashion test images.
}
\label{fig:examples}
%\end{center}
\end{figure}

%%%%%%%%%%%%%%%%%%
\section{Discussion}
\label{sec:conclusions}
It is well known that sparse coding has the ability to accurately model complex distributions of data, such as natural signals (images, audio, EEG, etc). In this work, we demonstrated that assuming a sparse representation for input data allows for the successful training of a general NN when incomplete data is given outperforming traditional sequential approaches and other start-of-the-art methods. It is highlighted that our method can be used with potentially any deep NN architecture, thus relying on their extraordinary capability to accommodate complex decision boundaries as usually needed in modern machine learning.  

Our method overcomes well known issues of previous approaches: (1) compared to imputation methods, our algorithm successfully incorporates the labelling information into the modeling of missing features; (2) sparse coding allows for a simple way to train dictionaries through linear methods such as stochastic gradient descent with back-propagation compared to the very expensive EM estimators for GMM used in probabilistic generative models, or SVD based algorithms for matrix rank minimization in matrix completion; (3) sparse coding can be more accurate modeling missing values in natural signals compared to GMM, especially for high dimensional data where GMM may require a huge number of parameters making it computationally prohibitive.

We analyzed the limitations of the classical imputation approach and demonstrated through experiments with synthetical and real-world datasets that our simultaneous algorithm always outperforms them for various cases such as LASSO, zero-filling, supervised/unsupervised mean and KNN based methods as well as the {\it state-of-the-art} method based on NNs and GGM recently proposed in \cite{Smieja:2018te}. Nevertheless, our experimental results on synthetic and real-world dataset showed that, even though we only constrained dictionaries to have unit-norm columns but not enforcing any other kind of constraint like maximum coherence, the obtained results seem to be satisfactory enough. However, further analysis on the required properties of dictionaries could provide deeper insights and alternative ways to improve the algorithm, which we aim to address in a future work.

While current simple sub-gradient based optimization approach provided satisfactory results in terms of performance, it is remarked that observed convergence is slow requiring a thousand of iterations sometimes. We believe, it could be improved by trying to incorporate some second-order derivatives information for computing the updates. Although, full Hessian computation becomes prohibitive with multi-layer NNs a diagonal approximation approach could be explored. Also, a rigorous convergence analysis in the line of the analysis in \cite{Tseng:2007gl,Hale:2008kk} and taking special properties of multi-layer NN classifier functions can be conducted in a future work.

Finally, we provided theoretical insights of the problem by providing sufficient conditions under which, if it is possible to train a classifier on incomplete observations so that its reconstructions are well separated by a hyperplane, then the same classifier also correctly separates the original (unobserved) data samples. 

{\bf Acknowledgments:} We are thankful for the RAIDEN computing system and its support team at RIKEN AIP, Tokyo. This work was supported by the JSPS KAKENHI (Grant No. 20H04249, 20H04208).

%\bibliographystyle{IEEEtran}
%\bibliography{Bib}   % Using Bibtex for References.bib

% Generated by IEEEtran.bst, version: 1.14 (2015/08/26)

%%%%%%%%%%%%%%%%%%%%%%%%%%%%%%%%%%%%%%%%%%%%%%%%%%%%%%%%%%%%%%%%%%%%%%%%%%%%%%
\section{Supplemental material}
\subsection{Additional pseudocodes}
\label{sec:alg}
Here, additional pseudocode of the algorithms discussed in the paper are provided. Once the classifier is trained by using Algorithm \ref{alg:train}, we are able to apply it to incomplete test data by using Algorithm \ref{alg:test}, where for fixed $\Theta$ and $\D$, we need to find the corresponding sparse coefficients $\s_i$, compute the full data vector estimations and, finally, apply the classifier.

A sparsity-based sequential method is presented in Algorithm \ref{alg:seq} (sequential approach), which consists on learning first the optimal dictionary $\D$ and sparse coefficients $\s_i$ compatible with the incomplete observations (dictionary learning and coding phase), followed by the training phase, where the classifier weights are tuned in order to minimize the classification error of the reconstructed input data vectors $\hat{\x}_i = \D\s_i$. It is noted that for the imputation stage (lines 2-12) other and more specialized dictionary learning algorithms with missing data can be applied, such as the ones proposed in \cite{Naumova:2017hv} for high-dimensional data or \cite{Mairal:fi} for color image data.

\begin{algorithm}
{\footnotesize
\caption{: Testing on incomplete data}
\label{alg:test}
\begin{algorithmic}[1]
\REQUIRE Incomplete data vectors $\{\x^o_i\}$, $i=1,2,\dots, I$, classifier parameters $\Theta$, dictionary $\D$, hyper-parameters $\lambda_1$ and $\lambda_2$, number of iterations $N_{iter}$ and update rate $\sigma_{\s}$ 
\ENSURE   ${\hat{y}_i}$ and reconstructions $\hat{\x}_i= \D \s_i, \forall i$
\STATE {\bf Sparse coding stage:} for fixed dictionary $\D$ find sparse representations of observations $\x^o_i$
\STATE \texttt{Initialize $\s_i, \forall i$ randomly}
      \FOR{$n\le N_{iter}$}
        \STATE $\Delta_i= -\sigma_{\s} \big[\lambda_1 \frac{\partial J_1}{\partial \s_i} +  \lambda_2 \frac{\partial J_2}{\partial \s_i}\big]$, $\forall i$ 
        \IF{$\s_i(j)[\s_i(j)+\Delta_i(j)] < 0$}
                \STATE $\Delta_i(j)=-\s_i(j)$; avoid zero crossing
        \ENDIF
        \STATE $\s_i = \s_i + \Delta_i$, $\forall i$     
      \ENDFOR
      \STATE $ \hat{\x}_i = \D\s_i, \forall i$; Compute reconstructions
      \STATE {\bf Classification stage:} apply classifier to reconstructions $\hat{\x}_i$
      \STATE $\hat{y}_i = \argmax_y (p^{y}_{\Theta}(\hat{\x}_i))$
\STATE {\bf return}  $\Theta, \hat{y}_i, \s_i, \hat{\x}_i, \forall i$
\end{algorithmic}
}
\end{algorithm}

\begin{algorithm}
{\footnotesize
\caption{: Sequential sparsity based approach}
\label{alg:seq}
\begin{algorithmic}[1]
\REQUIRE Incomplete data vectors and their labels $\{\x ^o_i, y_i\}$, $i=1,2,\dots, I$, hyper-parameters $\lambda_1$ and $\lambda_2$, number of iterations $N_{iter}$ and update rate $\sigma_{\Theta}$, $\sigma_{\D}$ and $\sigma_{\s}$ 
\ENSURE  Classifier weights $\Theta$ and reconstructions $\hat{\x}_i= \D \s_i, \forall i$
\STATE \texttt{Randomly initialize $\D, \s_i, \forall i$}
\STATE \textbf{Imputation stage: learning of $\D$ and $\s_i$}
      \FOR{$n\le N_{iter}$}
        \STATE $\D = \D - \sigma_{\D} \frac{\partial J_1}{\partial \D}$
        \STATE Normalize columns of matrix $\D$
        \STATE $\Delta_i= -\sigma_{\s} \big[\lambda_1 \frac{\partial J_1}{\partial \s_i} +  \lambda_2 \frac{\partial J_2}{\partial \s_i}\big]$, $\forall i$ 
        \IF{$\s_i(j)[\s_i(j)+\Delta_i(j)] < 0$}
                \STATE $\Delta_i(j)=-\s_i(j)$; avoid zero crossing
        \ENDIF
        \STATE $\s_i = \s_i + \Delta_i$, $\forall i$     
      \ENDFOR
\STATE $ \hat{\x}_i = \D\s_i, \forall i$; Compute reconstructions
\STATE \textbf{Training stage: update $\Theta$}
	 \FOR{$n\le N_{}$}
          \STATE $\Theta = \Theta - \sigma_{\Theta} \frac{\partial J_0}{\partial \Theta}$;
	 \ENDFOR
\STATE {\bf return} $\Theta, \D, \s_i, \hat{\x}_i = \D\s_i, \forall i$
\end{algorithmic}

}
\end{algorithm}

%%%%%%%%%%%%%%%%%%%%%%%%%%%%%%%%%%%%%%%%%%%%%%%%%%%%%%%%%%%%%%%%%%%%%%%%%%%%%%%%%%%%

\subsection{A condition based on RIP and sparsity} \label{sec:RIP}
{\bf The Restricted Isometry Property (RIP):}  An overcomplete dictionary $\D$ satisfies the RIP of order $K$ if there exists $\delta_K\in [0,1)$ s.t.
\begin{equation}\label{eq:RIP}
(1-\delta_K) \|\s\|^2_2 \le \|\D\s\|^2_2 \le (1+\delta_K) \|\s\|^2_2,
\end{equation}
holds for all  $\s \in \Sigma_K^P$. RIP was introduced in \cite{Candes:2005cs} and characterizes matrices which are nearly orthonormal when operating on sparse vectors. 

In the following theorem, we show that, by imposing conditions on the sparsity level of the representation and the RIP constant of a sub-matrix of the dictionary, we can guarantee to meet the sufficient condition (\ref{eq:sufficient}). 

\begin{theorem} \label{the:RIPbased}
Given a dataset $\{\x_i, y_i\}$, $i =1,2,\dots, I$ with normalized data vectors ($\|\x_i\|\le 1$) admitting a $K$-sparse representation over a dictionary $\D\in \R^{N\times P}$ with unit-norm columns, whose sub-matrices $\D^m_i$ satisfy the RIP of order $K$ with constant $\delta^i_K$, and suppose that, we have obtained an alternative dictionary $\D'\in \R^{N\times P}$, whose sub-matrices $\D'^m_i$ also satisfy the RIP of order $K$ with constant $\delta^i_K$ such that, for the incomplete observation $\x^o_i \in \R^{M_i}$, the $K$-sparse representation solution is non-unique, i.e. $\exists \s_i, \s_i' \in \Sigma_K^P$ such that $\x^o_i = \D^o_i\s_i = \D'^o_i\s_i'$, where $\s_i \in \R^P$ is the vector of coefficients of the true data, i.e. $\x_i = \D \s_i$ and $\s_i'$  provides a plausible reconstruction through $\hat{\x}_i = \D'\s_i'$ with $\|\hat{\x}_i\| \le 1$. If a perfect classifier $\{\w, b\}$ of the reconstruction $\hat{\x}_i$ exists such that $|f(\hat{\x})| = |\langle \w, \hat{\x} \rangle + b |> \epsilon_i >0$ and
\begin{equation}\label{eq:conditionRIP}
\epsilon_i > 2 \|\w^m_i\|_1\sqrt{\frac{K}{1-\delta^i_K}},
\end{equation}
then the full data vector $\x_i$ is also perfectly separated with this classifier, in other words: $f(\x_i) = \langle \w_i, \x_{i} \rangle + b >  0$ ($\le 0$) if $y_i =1$ ($y_i=0$).
\end{theorem}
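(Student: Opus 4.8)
The plan is to reduce the whole statement to Proposition~\ref{the:sufcondI}. Condition (\ref{eq:conditionRIP}) differs from the sufficient condition (\ref{eq:sufficient}) only in that the two inner products $|\langle \w^m, \x^m\rangle|$ and $|\langle \w^m, \hat{\x}^m\rangle|$ have each been replaced by the common upper bound $\|\w^m\|_1\sqrt{K/(1-\delta_K)}$. Hence it suffices to prove that each of these two quantities is indeed at most $\|\w^m\|_1\sqrt{K/(1-\delta_K)}$; then their sum is strictly below $\epsilon_i$ by (\ref{eq:conditionRIP}), so (\ref{eq:sufficient}) holds and Proposition~\ref{the:sufcondI} (through Lemma~\ref{the:main}) yields the desired separation of $\x_i$. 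I drop the sample index $i$ throughout.

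First I would control the coefficient vectors via RIP. Writing $\x^m = \D^m\s$ and noting that the missing block is a sub-vector of the normalized $\x$, so $\|\x^m\|_2 \le \|\x\|_2 \le 1$, the RIP lower bound in (\ref{eq:RIP}) applied to the $K$-sparse $\s$ gives $(1-\delta_K)\|\s\|_2^2 \le \|\D^m\s\|_2^2 = \|\x^m\|_2^2 \le 1$, whence $\|\s\|_2 \le 1/\sqrt{1-\delta_K}$. The identical argument applied to $\hat{\x}^m = \D'^m\s'$, using $\|\hat{\x}\|_2 \le 1$ and the RIP of $\D'^m$, yields $\|\s'\|_2 \le 1/\sqrt{1-\delta_K}$.

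Next I would estimate one inner product and transcribe the bound to the other. By H\"older's inequality $|\langle \w^m, \x^m\rangle| \le \|\w^m\|_1\,\|\x^m\|_\infty$. Each entry of $\x^m = \D^m\s$ is a dot product of a row of $\D^m$ with $\s$, and since every column of $\D$ has unit norm each entry of $\D^m$ has absolute value at most $1$; therefore $\|\x^m\|_\infty \le \|\s\|_1$. As $\s$ is $K$-sparse, $\|\s\|_1 \le \sqrt{K}\,\|\s\|_2 \le \sqrt{K/(1-\delta_K)}$ by the previous step, so $|\langle \w^m, \x^m\rangle| \le \|\w^m\|_1\sqrt{K/(1-\delta_K)}$. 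Running the same chain with $\D'$ (unit-norm columns), $\s'$ and its $K$-sparsity gives $|\langle \w^m, \hat{\x}^m\rangle| \le \|\w^m\|_1\sqrt{K/(1-\delta_K)}$. Summing and applying (\ref{eq:conditionRIP}) then (\ref{eq:sufficient}) completes the argument.

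The main obstacle --- indeed the only step needing genuine care --- is the coefficient bound $\|\s\|_2 \le 1/\sqrt{1-\delta_K}$, which is where RIP enters: it rests on the observation that the normalization $\|\x\|\le 1$ transfers to the missing block as $\|\x^m\|\le 1$, so that the RIP lower bound can be inverted to control $\|\s\|_2$. I would also flag that the symmetric treatment of the $\hat{\x}^m$ term presupposes that the learned dictionary $\D'$ has unit-norm columns, as enforced by the normalization step in Algorithm~\ref{alg:train}; were this dropped, one could instead bound $\|\hat{\x}^m\|_\infty \le \|\hat{\x}^m\|_2 \le 1 \le \sqrt{K/(1-\delta_K)}$ directly and reach the same estimate. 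Everything else is routine H\"older/Cauchy--Schwarz bookkeeping together with the $\ell_1$-to-$\ell_2$ passage for $K$-sparse vectors.
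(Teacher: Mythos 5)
Your proof is correct and follows essentially the same route as the paper's: reduce to the sufficient condition (\ref{eq:sufficient}) of Proposition~\ref{the:sufcondI}, bound each of $|\langle \w^m, \x^m\rangle|$ and $|\langle \w^m, \hat{\x}^m\rangle|$ by $\|\w^m\|_1\sqrt{K/(1-\delta_K)}$ using the RIP lower bound on $\|\s\|_2$, the $\ell_1$-to-$\ell_2$ factor $\sqrt{K}$ for $K$-sparse vectors, and the entry-wise bound $|\D^m(j,n)|\le 1$ from unit-norm columns (your H\"older phrasing is just a repackaging of the paper's double-sum estimate). Your two explicit clarifications --- that $\|\x\|\le 1$ transfers to the missing block via $\|\x^m\|\le\|\x\|$ before the RIP can be inverted, and that the symmetric bound for $\hat{\x}^m$ tacitly requires $\D'$ to have unit-norm columns --- are points the paper's proof glosses over, and they tighten rather than change the argument.
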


\begin{proof}
Let us prove that the sufficient condition (\ref{eq:sufficient}) is met under the hypothesis of Theorem \ref{the:RIPbased}.
Taking into account that $\x_i^m = \D_i^m \s_i$, we can write
\begin{align}\label{eq:abs}
\small
| \langle \w^{m}, \D_i^m\s_i \rangle |  & = \Big|\sum_{j=1}^{N-M_i} \w^m(j) \sum_{n=1}^N\D_i^m(j,n) \s_i(n) \Big| \nonumber \\
			 & \le   \sum_{j=1}^{N-M_i} |\w^m(j)| \sum_{n=1}^N |\D_i^m(j,n)| |\s_i(n)|.
\end{align}
Since we assumed normalized vectors $\|\x_i\| \le 1$, by applying the left-hand side of the RIP we obtain: $\|\s_i\| \le 1/\sqrt{1-\delta^i_K}$, and, taking into account that $\|\s_i\|_1 \le \sqrt{K}\|\s_i\|$ and $ |\D_i^m(j,n)| \le 1$ (columns of $\D$ are unit-norm), we obtain:
\begin{equation}\label{eq:eq1}
\small
| \langle \w^{m}, \D_i^m\s_i \rangle |  \le  \sqrt{\frac{K}{1-\delta^i_K}} \sum_{j=1}^{N-M_i} |\w^m(j)| = \sqrt{\frac{K}{1-\delta^i_K}}\|\w^m_i\|_1.
\end{equation}
Similarly, using $\hat{\x_i}^m = \D_i'^{m} \s_i'$, we can obtain that 
\begin{equation}\label{eq:eq2}
| \langle \w^{m}, \D_i'^m\s_i' \rangle | \le  \sqrt{\frac{K}{1-\delta^i_K}}\|\w^m_i\|_1.
\end{equation}
Putting equations (\ref{eq:eq1}) and (\ref{eq:eq2}) together with equation (\ref{eq:conditionRIP}) complete the proof of the sufficient condition (\ref{eq:sufficient}).
\end{proof}

%%%%%%%%%%%%%%%%%%%%%%%%%%%%%%%%%%%%%%%%%%%%%%%%%%%%%%%%%%%%%%%%%%%%%%%%%%%%%%%%%%%%

\subsection{Experimental results details}
\subsubsection{Implementation}\label{sec:implementation}
We implemented all the algorithms in Pytorch 1.0.0 on a single GPU. The code is available at\footnote{ https://github.com/ccaiafa/SimultCodClass.}. % \url{https://github.com/ccaiafa/SimultCodClass}. 

Initializations of dictionary $\D$ and coefficients $\s_i$ were made at random. However, we think some improvements in convergence could be achieved by using some dedicated dictionaries such as the case of Wavelet or Cosine Transform matrices. 

To update NN weights ($\Theta$), we used standard Stochastic Gradient Descent (SGD) with learning rate $\sigma_{\Theta}$ and momentum $m$, while for updating dictionary $\D$ and vector coefficients $\s_i$, we used fixed update rate $\sigma = \sigma_{\D} = \sigma_{\s}$. It is noted that we used different update rates for training and testing stages. In Table \ref{tab:settings}, we report the settings used for experiments for MNIST and CIFAR10 datasets, which includes Number of iterations $N_{iter}$, batch size $bs$, learning rate $\sigma_{\Theta}$, momentum $m$, update rate $\sigma$ (training and test).

\subsubsection{Hyperparameter tuning}\label{sec:hyperparameters}
In Table \ref{tab:crossval} we present the results of the grid search for hyper-parameter tuning on MNIST and CIFAR10 datasets. We fit our model to the training dataset for a range of values of parameters $\lambda_1$ and $\lambda_2$ and apply it to a validation data set. Figure \ref{fig:hypertuning} shows the validation accuracy obtained with different classifiers and levels of missing entries for MNIST dataset. 

\begin{table}[]
\caption{\footnotesize{Experimental settings for MNIST and CIFAR10 datasets: Number of iterations $N_{iter}$, batch size $bs$, learning rate $\sigma_{\Theta}$, momentum $m$, update rate $\sigma$ (training and test) }}
\centering
\scalebox{0.73}{
\begin{tabular}{ |c|c|c|c|c|c|c|c|  }
 \hline
 {\bf Dataset} 					&  {\bf Classifier} 	&  $N_{iter}$ 	& $bs$	 	& $\sigma_{\Theta}$ 	& $m$ 	& $\sigma$ (train) 	& $\sigma$ (test) \\ 
 \hline
 \multirow{2}{3em}{\bf MNIST}  		& \bf Log. Reg. 		& 3000 		& 250 		& 0.1 	& 0.5 	& 1.0  			& 5.0    \\ \cline{2-8}
 							&\bf  CNN4 		& 3500 		& 250 		& 05	 	& 0.5 	& 0.4 			& 0.5  \\
 \hline
 \bf CIFAR10 					& \bf Resnet18 		& 1000 		& 64	 		& 0.01 	& 0.5 	& 1.0 			& 2.5   \\
 \hline
\end{tabular}
}
\label{tab:settings}
\end{table}

\subsubsection{Additional visual results}
To visually evaluate our results, additional randomly selected examples of original (complete) images of the test dataset in MNIST and Fashion, together with their given incomplete observations and obtained reconstructions, are shown in Figure \ref{fig:moreexamplesMNIST} and Figure \ref{fig:moreexamplesFashion} 

\begin{table}[]
\caption{\footnotesize{Hyper-parameter tuning: crossvalidated hyperparameters $\lambda_1$ and $\lambda_2$ obtained for MNIST and CIFAR10 datasets with the classifiers used in our experiments.}}
\centering
\scalebox{0.63}{
\begin{tabular}{ |c|c|c|c|c|c|c|c|c|c|  }
 \hline
 \multirow{3}{3em}{\bf Dataset} &  \multirow{3}{4em}{\bf Classifier} & \multicolumn{6}{|c|}{\bf Random missing entries} & \multicolumn{2}{|c|}{\bf Occlusion} \\ \cline{3-10}
 & &  \multicolumn{2}{|c|}{\bf 75\%} & \multicolumn{2}{|c|}{\bf 50\%}  &\multicolumn{2}{|c|}{\bf 25\%}  &\multicolumn{2}{|c|}{\bf 50\%}  \\ \cline{3-10}
  & & $\lambda_1$ & $\lambda_2$ & $\lambda_1$ & $\lambda_2$  & $\lambda_1$ & $\lambda_2$  & $\lambda_1$ & $\lambda_2$  \\
 \hline
 \multirow{2}{3em}{\bf MNIST}  & \bf Log. Reg. & 0.32 & 1.28 & 0.64 & 1.28 & 0.64 & 1.28 & - & -  \\ \cline{2-10}
 &\bf  CNN4 & 1.28 & 1.28 & 2.56 & 1.28 & 5.12 & 1.28 & 10.24 & 10.24  \\
 \hline
 \bf CIFAR10 & \bf Resnet18 & 0.024 & 0.008 & 0.032 & 0.004 & 0.032 & 0.01 & - & -  \\
 \hline
\end{tabular}
}
\label{tab:crossval}
\end{table}

\begin{figure}[ht]
 \centering
 \includegraphics[width=8cm]{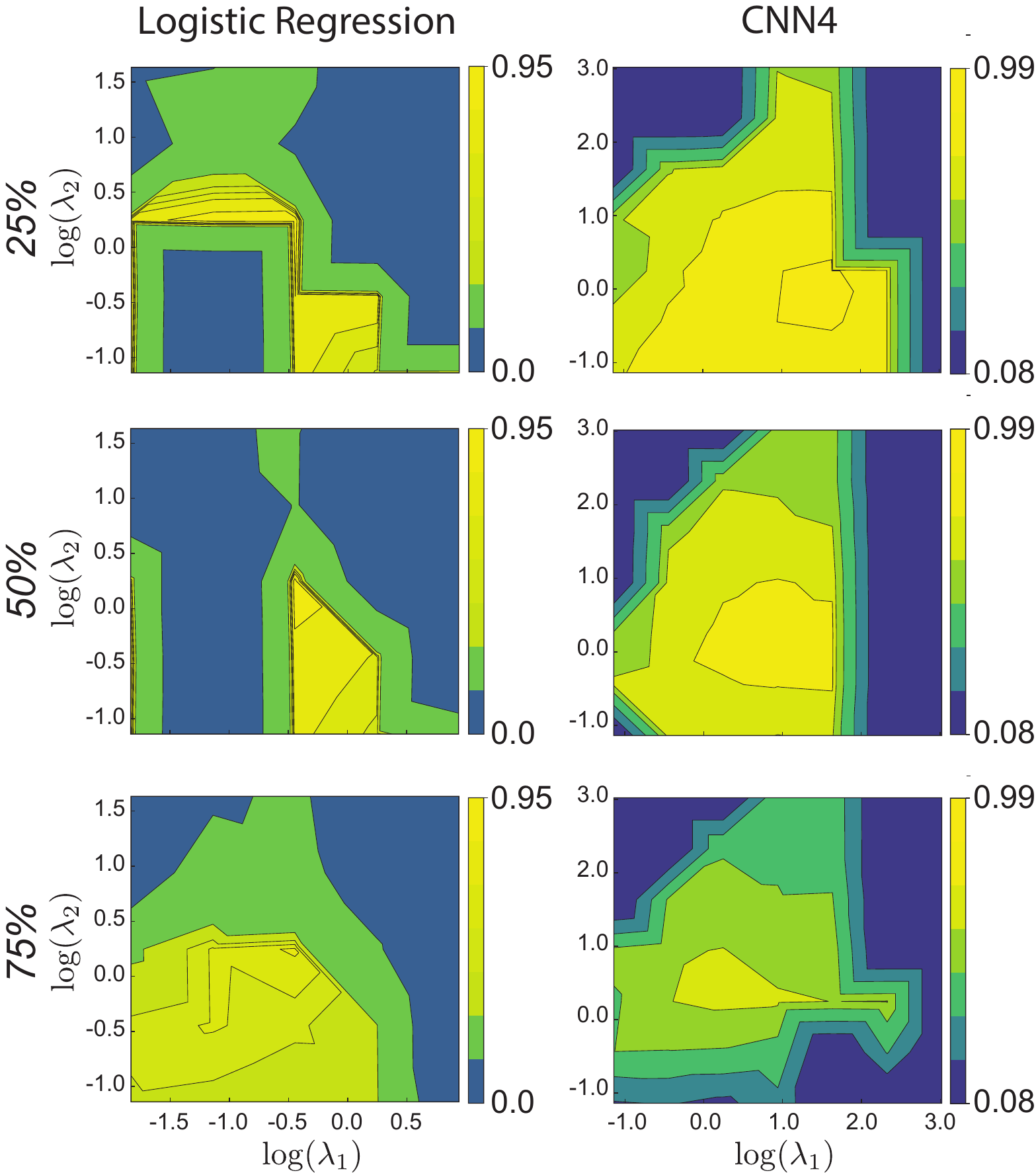}
\caption{\footnotesize{Test accuracy in the grid search for hyper-parameter tuning in MNIST dataset:  $\lambda_1$ and $\lambda_2$ were tuned by cross-validation for various levels of missing entries: 25\%, 50\% and 75\%.}}
\label{fig:hypertuning}
\end{figure}

\begin{figure}[ht]
 \centering
 \includegraphics[width=8cm]{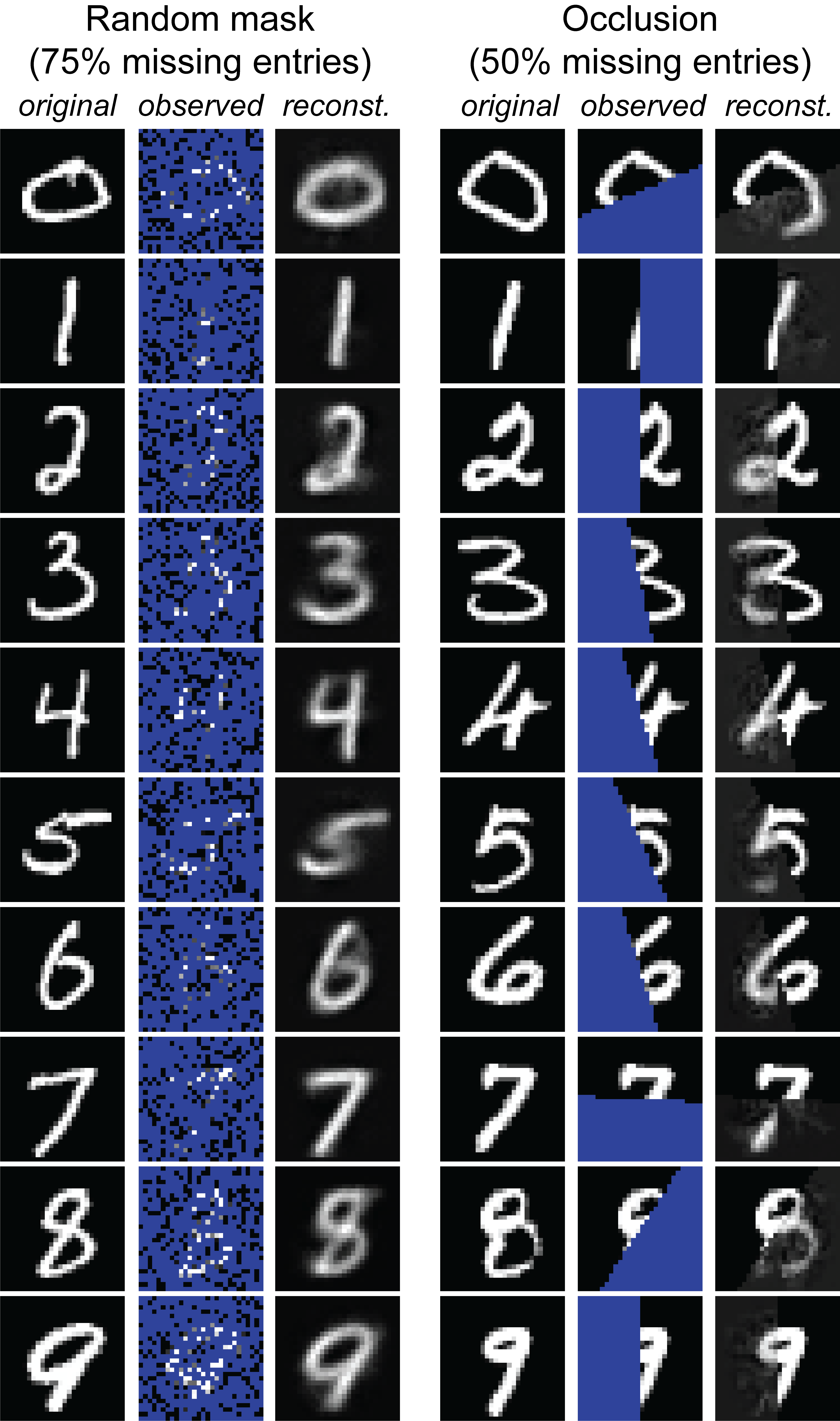}
\caption{\footnotesize{Reconstructions of incomplete test MNIST dataset images by applying our simultaneous classification and coding algorithm with the CNN4 architecture.}}
\label{fig:moreexamplesMNIST}
\end{figure}

\begin{figure}[ht]
 \centering
 \includegraphics[width=8cm]{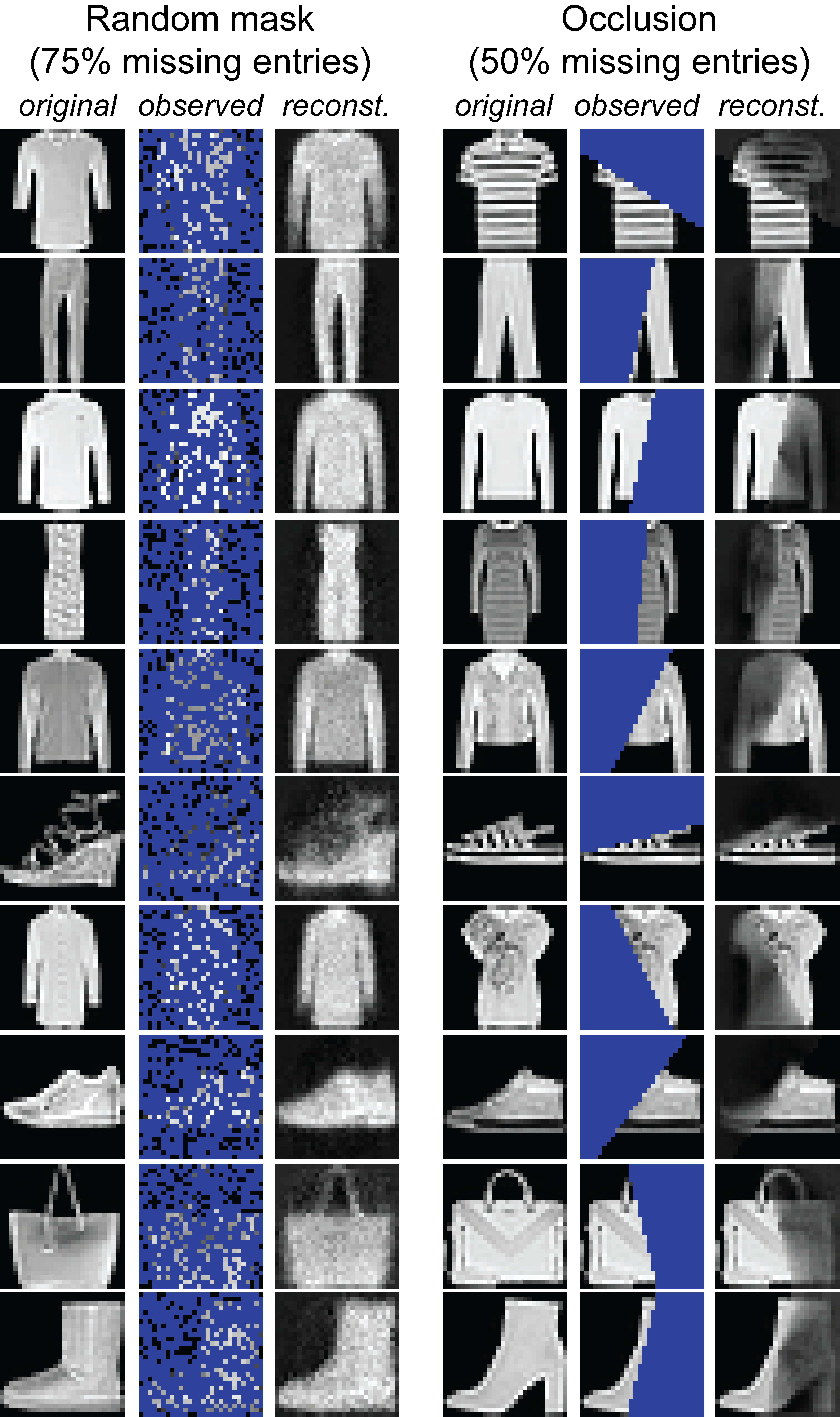}
\caption{\footnotesize{Reconstructions of incomplete test Fashion dataset images by applying our simultaneous classification and coding algorithm with the CNN4 architecture with Batch Normalization (BN).}}
\label{fig:moreexamplesFashion}
\end{figure}

% that's all folks
\end{document}